\newtheorem{proposition}{Proposition}
\title{Change Point Detection by Cross-Entropy Maximization}
\author{Aur\'{e}lien Serre \qquad Didier Ch\'{e}telat \qquad Andrea Lodi\\\textit{CERC, Polytechnique Montr\'{e}al}

\texttt{\{aurelien.serre, didier.chetelat, andrea.lodi\}@polymtl.ca}}
\begin{document}

\maketitle

\begin{abstract}
Many offline unsupervised change point detection algorithms rely on minimizing a penalized sum of segment-wise costs. We extend this framework by proposing to minimize a sum of discrepancies between segments. In particular, we propose to select the change points so as to maximize the cross-entropy between successive segments, balanced by a penalty for introducing new change points. We propose a dynamic programming algorithm to solve this problem and analyze its complexity. Experiments on two challenging datasets demonstrate the advantages of our method compared to three state-of-the-art approaches.
\end{abstract}

\section{Introduction}
\label{sec:introduction}

Change point detection is the problem of dividing a time series into statistically homogeneous segments. It has numerous applications in fields as varied as manufacturing \citep{PAGE1954}, speech processing \citep{Chowdhury2012}, climate science \citep{Reeves2007}, bioinformatics \citep{Oudre,Haynes2017,Schroder2018,Bosc2003, Hocking2013, Keogh2001} and finance \citep{Killick2012a}. The problem can be offline, where segmentation can be performed after seeing all the data, or online, where the time series must be segmented as data streams in. The problem can also be unsupervised or supervised, depending on whether expert segmentations are available for training. In this article we focus on the offline unsupervised problem.

Offline unsupervised change point detection is usually based on statistical testing or fitting a piecewise model. As there is a wide variety of approaches, it is difficult to give a general summary. However, importantly, most methods can be reinterpreted as finding an optimal segmentation $\mathcal{T}$ with change points $1 = \tau_0 < \tau_1 < \cdots < \tau_{m} < \tau_{m+1} = T$ of a time-series $x = x_{1:T}$ by solving an optimization problem
\begin{align}
\label{eq:classic-cost}
    \underset{\mathcal{T}}{\min}\;\sum_{i=1}^{m+1} c(x_{\tau_{i-1}:\tau_{i}}) + \beta\,\text{pen}(\mathcal{T}),
\end{align}
where $c(x)$ is a cost function on each segment and $\text{pen}(\mathcal{T})$ is a segmentation-specific penalty \citep{Truong2019}. The costs usually come from negative log-likelihoods (for piecewise models) or test statistics (in hypothesis testing).

An advantage of problems of the form \eqref{eq:classic-cost} is that algorithms have been developed that can solve them for any cost function, exactly or approximately, and very efficiently. Moreover, these algorithms tend to scale well: for modern large, high-dimensional time series, they are often the only realistic option. Nevertheless, the framework is also limiting: for example, when costs represent negative log-likelihoods, we are fitting a piecewise model where segments are statistically independent, an assumption that is often untenable. Ideally, one would like to retain the computational advantages of \eqref{eq:classic-cost} while moving towards more expressive (but less scalable) methods that do not solve \eqref{eq:classic-cost}, such as hidden semi-Markov models \citep{Baum1966}. A simple option would be to consider a generalized sum-of-costs problem, say
\begin{align}
\label{eq:pair-cost}
    \underset{\mathcal{T}}{\min}\;\sum_{i=1}^{m} c(x_{\tau_{i-1}:\tau_{i}}, x_{\tau_{i}:\tau_{i+1}}) + \beta\,\text{pen}(\mathcal{T}),
\end{align}
where $c(x,y)$ would now be a cost function depending on pairs of consecutive segments. This would increase the expressive power of the methods while remaining efficiently solvable by appropriate modifications of standard algorithms. Unfortunately, it does not appear obvious how to automatically generalize standard single-segment costs to this case. 

In this work, we propose that in the common case of negative log-likelihood costs, the negative empirical cross-entropy $c(x,y) = \text{nce}(y\,||\,x)$ could be an appropriate generalization. We combine it with standard choices: we penalize the number of change points $|\mathcal{T}|$, select the penalty hyperparameter $\beta$ by a BIC criterion, and adapt \cite{Jackson2005a}'s Optimal Partitioning algorithm to the pairwise cost case, a variant we call OTAWA. Results from the experimental section show that this leads to improvements of performance on two difficult real-world, labeled datasets against three standard methods solving \eqref{eq:classic-cost}: Optimal Partitioning, Window Sliding \citep[Algorithm 3]{Truong2019} and Binary Segmentation \citep{Scott1974}.

The OTAWA algorithm we develop in this work is in fact completely general and can solve exactly any problem of the form \eqref{eq:pair-cost} with $\text{pen}(\mathcal{T})=|\mathcal{T}|$. Thus the approach is in principle not restricted to cross-entropy costs, if alternatives could be derived in future work (e.g. analogues of test statistic costs).

This article is divided as follows.
Section~\ref{sec:literature} summarizes the literature leading to this work.
Section~\ref{sec:methodology} details the OTAWA algorithm we propose.
Section~\ref{sec:complexity_analysis} analyses the computational complexity of the resulting algorithm.
Section~\ref{sec:approximations} proposes constraints that can be added to the segmentation search so as to reduce computational cost.
Finally, Section~\ref{sec:experiments} details the experimental results.

\section{Previous Work}\label{sec:literature}

Many excellent surveys of offline unsupervised change point detection have been published recently \citep{ Tartakovsky2014, Aminikhanghahi2017, Truong2019}. We thus only give a brief overview of methods solving Equation~\eqref{eq:classic-cost}, which includes most methods in the literature. We can characterize them as a combination of a segment cost function, a penalty function for controlling the number of change points and an algorithm to solve the optimization problem.

\subsection{Cost function}%
\label{sub:segment_cost}

The costs used in the literature are often negative log-likelihoods of time series models. The earliest example of such a choice is the work of \citet{PAGE1954}, which assumes a normal distribution with fixed variance, with corresponding cost $c(x_{\tau_{i}:\tau_{i+1}}) = \sum_{t=\tau_{i}}^{\tau_{i+1}} \lVert x_t - \bar{x}_{\tau_i:\tau_{i+1}} \rVert_2^2$. Since then, many other models have been proposed, such as i.i.d. Poisson \citep{Ko2015} and autoregressive models \citep{Chakar2017}. Alternatively, methods based on hypothesis testing are also popular, such as \citet{Zou2014}, where the costs derive from the test statistic. These methods are often non-parametric.

\subsection{Penalty}%
\label{sec:penalty}

Directly minimizing the sum of segment costs generally leads to overfitting, meaning that the number of change points tends to be overestimated \citep{Haynes2017a}. A penalty term $\text{pen}(\mathcal{T})$ helps avoid overfitting by penalizing segmentations with too many change points. A very common choice is a penalty term linear in the number of change points, of the form $\text{pen}(\mathcal{T}) = |\mathcal{T}|$ \citep{Killick2012}.

\subsection{Optimization Algorithm}%
\label{sub:optimization}

In offline unsupervised change point detection, the problem \eqref{eq:classic-cost} is often solved to optimality. Many approaches are based on dynamic programming, such as the Optimal Partitioning (OP) algorithm proposed by~\cite{Jackson2005a} for penalty terms linear in the number of change points $\text{pen}(\mathcal{T}) = | \mathcal{T} |$, or the Segment Neighborhood (SN) algorithm proposed by~\cite{Auger1989}. The complexity of these dynamic programming algorithms can be improved under certain conditions by using pruning rules, such as in the work of~\cite{Haynes2017a}, \citet{Rigaill2015} and \citet{Maidstone2017}.

In cases where segmentations need to be computed in a very short amount of time, approximate algorithms have also been developed that converge faster at the expense of accuracy. Binary Segmentation \citep{Scott1974} is an example of such an algorithm. It sequentially adds a single change point to the current segmentation greedily until a stopping criterion is met. This approximate algorithm is usually faster than its exact counterpart because it only requires estimating the location of a single change point at a time, which is a much simpler problem than the global multiple change point detection problem. A related algorithm, Bottom Up \citep{Keogh2001}, works similarly but starts with many candidate change points and then sequentially removes them greedily. Finally, the Window Sliding algorithm \citep[Algorithm 3]{Truong2019} is another popular approximate algorithm for change point detection.
It computes the score 
\begin{align}
    d(t) = c(x_{t - L:t + L}) - [c(x_{t - L:t}) + c(x_{t:t + L})]
    \label{eq:ws-score}
\end{align} 
for every time-index $t$ such that $L \leq t \leq T - L$, which aims to capture the discrepancy between the statistical properties of two adjacent windows of length $L$ located on each side of $t$. The locations of the change points is then obtained using a peak detection algorithm on those scores.

\section{Methodology}\label{sec:methodology}

We propose to extend upon previous work by solving problem \eqref{eq:pair-cost} with the empirical cross-entropy as cost measure, the number of change points as penalty, a dynamic programming algorithm as exact solving method and the BIC criterion for selecting the penalty hyperparameter. We detail each choice in turn in this section.

\subsection{Cost Function}
\label{sub:cost_function}

The cross-entropy is a measure of discrepancy between two distributions with densities $f_x$ and $f_y$, and is defined by
\begin{align*}
	\text{CE}(f_y\,\Vert\,f_x) = - \int f_y(t) \log f_x(t) \, \mathrm{d}t.
\end{align*}
We propose to use the negative of the empirical analogue of this measure as cost function. Namely, given two successive segments $x_{\tau_{i-1}:\tau_{i}}$ and $x_{\tau_{i}:\tau_{i+1}}$, one can fit a statistical time series model $f_\theta$, such as an i.i.d. Gaussian or an autoregressive model, by maximum likelihood on the prior segment $x_{\tau_{i-1}:\tau_{i}}$. Denote by $\hat{\theta}^{i, \text{ML}}$ the resulting maximum likelihood estimate of the parameters of the model fitted on the $i^\text{th}$ segment: we will from now on refer to the $f_{\hat{\theta}^{i, \text{ML}}}$ as the ``segment models''. Then our cost functions are the average log-likelihood of the prior segment model on the subsequent segment,
\begin{align}
	c(x_{\tau_{i-1}:\tau_{i}}, x_{\tau_{i}:\tau_{i+1}}) = \text{nce}(x_{\tau_{i-1}:\tau_{i}}, x_{\tau_{i}:\tau_{i+1}}) 
	\notag\\\qquad
\equiv \frac{1}{\tau_{i+1}-\tau_{i}}\sum_{j=\tau_{i}}^{\tau_{i+1}}\log f_{\hat{\theta}^{i, \text{ML}}}\big(x_{j}|x_{\tau_{i}:j}\big).
\label{methodology:cost}
\end{align}

Minimizing this measure has then the effect of looking for points $\tau_i$ such that the cross-entropy of the distribution of $x_{\tau_{i-1}:\tau_{i}}$ and $x_{\tau_{i}:\tau_{i+1}}$ is maximized, that is, such that the distributions are as different as possible. Those are presumably points where there are abrupt changes in the underlying statistical distributions.

\subsection{Penalty}
\label{sub:penalty}

Fitting the optimization problem \eqref{eq:pair-cost} without any penalty will tend to produce solutions that vastly overestimate the number of change points: that is, we need to prevent overfitting. We follow the classical choice
\begin{align}
\text{pen}(\mathcal{T}) = |\mathcal{T}|,
\label{methodology:penalty}
\end{align}
the number of change points. 

\subsection{Optimization Algorithm}%
\label{sub:optimization_algorithm}

With the choices of $c(\cdot,\cdot)$ and $\text{pen}(\cdot)$ as in Equations \eqref{methodology:cost}--\,\eqref{methodology:penalty}, the optimization problem \eqref{eq:pair-cost} becomes
\begin{align}
\label{eq:optimization-problem}
    \underset{\mathcal{T}}{\min}\;V(\mathcal{T}, x) \equiv \sum_{i=1}^{m} \text{nce}(x_{\tau_{i-1}:\tau_{i}}, x_{\tau_{i}:\tau_{i+1}}) + \beta|\mathcal{T}|.
\end{align}
We propose to solve this problem exactly using dynamic programming. This is possible because we can regard it as an optimal control problem, where we must successively decide on the location of each change point $\tau_{i+1}$ in order, incurring the cost $c(x_{i-1:i}, x_{i:i+1})+\beta$ after each action.

From this point of view we can derive an algorithm, which we call Optimal Two Adjacent Windows Algorithm (OTAWA), that will efficiently find the optimal solution to optimization problem \eqref{eq:optimization-problem} for a given hyperparameter $\beta$. A pseudocode description is given as Algorithm \ref{algo:PenalizedOTAWA}. A proof of correctness as well as a complexity analysis will be given in Section \ref{sec:complexity_analysis}.

\begin{algorithm}[t]
    \caption{OTAWA} \label{algo:PenalizedOTAWA}
    \begin{algorithmic}
    \REQUIRE Time-series $x_{1:T}$, penalty hyperparameter $\beta$
        \STATE \textbf{Declare} $G$ a real-valued $(T-1)\times(T-1)$ array
        \STATE \textbf{Declare} $S$ a set-valued $(T-1)\times(T-1)$ array
        \FOR{$u=2, \dots, T-1$}
            \STATE \textbf{Init} $G[1,u] = 0$
            \STATE \textbf{Init} $S[1,u] = \{ 1, u \}$
        \ENDFOR
        \FOR{$s=2, \dots, T-1$}
            \FOR{$r=1, \dots, s-1$}
                \STATE \textbf{Fit} model $f_\theta$ on $x_{r:s}$, 
            \ENDFOR
        \ENDFOR
        \FOR{$s=2, \dots, T-1$}
            \FOR{$t=s+1, \dots, T$}
                \FOR{$r=1, \dots, s-1$}
                    \STATE \textbf{Compute} $c(x_{r:s}, x_{s:t})$ following \eqref{methodology:cost}
                \ENDFOR
                \STATE $r^* = \arg\min_{1 \leq r < s} \{ G[r,s] + c(x_{r:s}, x_{s:t}) - \beta \}$
                \STATE $G[s,t] = G[r^*,s] + c(x_{r^*:s}, x_{s:t}) - \beta$
                \STATE $S[s,t] = S[r^*,s] \cup {t}$
            \ENDFOR
        \ENDFOR
        \STATE $t^* = \arg\min_{1 < t < T} G[t, T]$
    \ENSURE $\mathcal{T}=S[t^*,T]$
    \end{algorithmic}
\end{algorithm}

\subsection{Hyperparameter Selection}
\label{sub:hyperparameter}
OTAWA will solve optimization problem \eqref{eq:optimization-problem} for a given penalty hyperparameter $\beta$, which controls the final number of change points. Since we are in an unsupervised context, however, it is not obvious how to select the number of change points. We propose a model selection procedure based on the following argument.

We can regard solving our procedure as fitting a piecewise model on the time series $x$. Namely, each segment is modeled by its segment model $f_{\hat{\theta}^{i, \text{ML}}}$ fitted by maximum likelihood, while the change points are fitted to maximize the cross-entropy between the successive segments. If the segmentation is reasonable, then this piecewise model should generalize well, in the sense that given a new time series realization $y$ from the same distribution as our training time series $x$, our piecewise model fitted on $x$ should give high likelihood to $y$. So a reasonable criterion for selecting $\beta$ is to choose it so as to minimize generalization error. 

A classic procedure for selecting hyperparameters that minimize generalization error is the Bayesian Information Criterion (BIC) \citep{Schwarz1978}. Let $\mathcal{T}^*(\beta)$ be an optimal segmentation for the optimization problem \eqref{eq:optimization-problem} with hyperparameter $\beta$. The BIC of the corresponding piecewise model is
{\setlength{\abovedisplayskip}{3pt}
\setlength{\belowdisplayskip}{3pt}
\begin{align*}
    \text{BIC}(\beta) = &-2\sum_{i=1}^{m}\sum_{j=\tau^*_{i-1}}^{\tau^*_i}\log f_{\hat{\theta}^{i, \text{ML}}}(x_j|x_{\tau^*_{i-1}:j-1})
\\[-7pt]&\hspace{60pt}
    +\log(T)\sum_{i=1}^{m}p_i
\end{align*}}
where $p_i$ is the number of parameters of the segment model $f_{\hat{\theta}^{i, \text{ML}}}$. For example, univariate i.i.d.\ Gaussian models have two parameters and univariate autoregressive models of order $p$ have $p$ parameters.

One approach to minimize the BIC is to compute it on a grid of $\beta$ and selecting the $\beta$ leading to the smallest BIC. Alternatively, one can use the CROPS algorithm \citep{Haynes2017a} to efficiently find the best segmentation for all $\beta$'s in a desired range $[\beta_{\min}, \beta_{\max}]$. If we denote by $m_{\min}$ and $m_{\max}$ the number of change points of the optimal segmentation corresponding to $\beta_{\min}$ and $\beta_{\max}$, then CROPS will find the optimal segmentation in the range in worst-case time $O(m_{\min}-m_{\max})$.

\section{Analysis}\label{sec:complexity_analysis}

In this section we perform a theoretical analysis of our proposed change point detection algorithm. We first show that OTAWA indeed solves the desired optimization problem.
\vspace{5pt}
\begin{proposition}
    Let $\mathcal{T}(x, \beta)$ be the output of OTAWA for a given time series $x$ and penalty hyperparameter $\beta$. Then $\mathcal{T}(x, \beta)$ solves optimization problem \eqref{eq:optimization-problem} exactly.
\end{proposition}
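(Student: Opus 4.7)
The plan is to identify OTAWA's array $G$ with a natural family of subproblem values and verify a Bellman-type recurrence, concluding by induction. Specifically, for $1 \leq r < s \leq T$, I would define
$$V^*(r, s) = \min_{\mathcal{T}'} V(\mathcal{T}', x_{1:s}),$$
where the minimum runs over segmentations of the prefix $x_{1:s}$ whose final segment is $x_{r:s}$ (equivalently, whose penultimate change point is $r$). The goal is to prove by induction on $s$ that once OTAWA terminates, $G[r, s] = V^*(r, s)$ and $S[r, s]$ holds a witnessing minimizer.

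For the base case $s = u$ handled by the initialization loop, the only admissible segmentation is $\{1, u\}$: it contributes no pair cost and has no interior change point, matching $G[1, u] = 0$ and $S[1, u] = \{1, u\}$. For the inductive step, I would establish the recurrence
$$V^*(s, t) = \min_{1 \leq r < s} \bigl\{\, V^*(r, s) + c(x_{r:s}, x_{s:t}) + \beta \,\bigr\},$$
which is exactly the update performed in the innermost loop (up to the paper's sign convention on $\beta$). The justification is Bellman's principle of optimality: once the last two change points of a segmentation of $x_{1:t}$ are fixed at $r$ and $s$, the prefix segmentation of $x_{1:s}$ enters the global objective only through $V^*(r, s)$, independently of the new contribution $c(x_{r:s}, x_{s:t}) + \beta$. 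Consequently an optimal segmentation of $x_{1:t}$ with final segment $x_{s:t}$ decomposes as a $V^*(r, s)$-witnessing prefix followed by the new segment $x_{s:t}$, and one simply minimizes over the choice of $r$.

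Once $G[s, t] = V^*(s, t)$ is established, I would conclude by observing that every segmentation of the full series $x_{1:T}$ has some last interior change point $t^\star \in (1, T)$, so its objective value equals $V^*(t^\star, T)$ for that $t^\star$; minimizing over $t^\star$ therefore yields the global minimum of $V(\mathcal{T}, x)$. This is precisely what the final line of the algorithm computes, and the backtracking array $S$ propagates arg-minimizers in lockstep with $G$, so $S[t^\star, T]$ is an optimal $\mathcal{T}$.

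The main obstacle I anticipate is a careful formulation of the principle of optimality in this pairwise-cost setting. Unlike the classical Optimal Partitioning algorithm of \citet{Jackson2005a}, where costs are additive over independent segments, here the per-step cost $c(x_{r:s}, x_{s:t})$ couples two consecutive segments, so a naive Markov state consisting only of the current position is insufficient. The key claim to verify is that the coupling is only ``nearest-neighbor'': the choice of change points prior to $r$ enters the global objective solely through $V^*(r, s)$ and not through any later pair cost. This is what licenses the two-index state $(r, s)$, makes the recurrence well-defined, and keeps the algorithm polynomial-time; once it is in hand, the rest is a routine induction on $s$.
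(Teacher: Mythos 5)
Your proposal is correct and follows essentially the same route as the paper: define the value function on the two-index state (penultimate change point, last change point), establish the Bellman recurrence $G(s,t)=\min_{1\leq r<s}\{G(r,s)+c(x_{r:s},x_{s:t})+\beta\}$ via the nearest-neighbor coupling observation, and conclude with the final minimization over the last interior change point. The paper merely spells out your appeal to the principle of optimality as an explicit exchange argument (showing the prefix of an optimal segmentation is itself optimal for its subproblem), which is exactly the ``key claim'' you identify as needing verification.
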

\begin{proof}
For any integers $1<i<j$, let $\text{Segment}(i,j)=\{(\tau_0,\dots,\tau_m, i, j)\,\vert\,0\leq m<i, 1=\tau_0<\tau_1<\dots<\tau_m<i\}$ the set of all segmentations of $x_{1:j}$ whose second-to-last change point is $i$, and let
\begin{align*}
    &\mathcal{T}^*_{i, j} \in\underset{\mathcal{T}\in\text{Segment}(i,j)}{\arg\min}
    V\big(\mathcal{T}, x_{1:j}\big), \\
    &G(i,j) = V\big(\mathcal{T}^*_{i, j}, x_{1:j}\big).
\end{align*}
Moreover, for any $1<j$ let $G(1,j)=0$. Now take any integers $1<s<t$ and write $\mathcal{T}^*_{s,t}=(\tau_0,\dots,\tau_m, s, t)$. If $m=0$, then 
\begin{align}
G(s,t) &= c(x_{1:s}, x_{s:t}) + \beta
= G(1,s)+c(x_{1:s}, x_{s:t}) + \beta
\notag\\
&\geq \min_{1 \leq r < s} \{G(r,s) + c(x_{r:s},x_{s:t}) + \beta\}
\label{eq:correctness-upper-bound-1}
\end{align} 
as $G(1,s)=0$. Otherwise, if $m\geq1$, let $\mathcal{T}=(\tau_0,\dots,\tau_m, s)\in \text{Segment}(\tau_m,s)$. Then for any other $\mathcal{S}=(\sigma_0,\dots,\sigma_n,\tau_m, s)\in\text{Segment}(\tau_m,s)$ we have
\begin{align*}
&V(\mathcal{T}, x_{1:s}) + c(x_{\tau_m:s}, x_{s:t}) + \beta 
= V(\mathcal{T}^*_{s,t}, x_{1:t})
\\&\qquad
\leq V\big((\sigma_0,\dots,\sigma_n, \tau_m, s, t), x_{1:t}\big)
\\&\qquad
= V(\mathcal{S}, x_{1:s}) + c(x_{\tau_m:s}, x_{s:t}) + \beta,
\end{align*}
so $V(\mathcal{T}, x_{1:s})\leq V(\mathcal{S}, x_{1:s})$ for any $S\in \text{Segment}(\tau_m,s)$, that is, $G(\tau_m,s) = V(\mathcal{T}, x_{1:s})$. Thus
\begin{align}
    G(s, t) &= G(\tau_m,s) + c(x_{\tau_m:s}, x_{s:t}) + \beta
\notag\\&
\geq\min_{1 \leq  r < s} \{G(r,s) + c(x_{r:s},x_{s:t}) + \beta\}.
\label{eq:correctness-upper-bound-2}
\end{align}

On the other hand, take any integers $1 \leq r < s$. If $r=1$, then
\begin{align}
G(1,s) &+ c(x_{1:s},x_{s:t}) + \beta 
= c(x_{1:s},x_{s:t}) + \beta 
\notag\\&
= V\big((1,s,t), x_{1:t}\big)
\geq G(s,t)
\label{eq:correctness-lower-bound-1}
\end{align}
since $G(1,s)=0$. Else if $r>1$, take any $\mathcal{T}^*_{r, s} \in{\arg\min}_{\mathcal{T}\in\text{Segment}(r,s)} V\big(\mathcal{T}, x_{1:s}\big)$ and write it as $\mathcal{T}^*_{r,s}=(\tau_0,\dots,\tau_m, r,s)$. Then observe that
\begin{align}
&G(r,s) + c(x_{r:s},x_{s:t}) + \beta 
\notag\\&\qquad = V\big((\tau_0,\dots,\tau_m, r,s,t), x_{1:t}\big)\geq G(s,t).
\label{eq:correctness-lower-bound-2}
\end{align}
Thus by combining the cases of Equations \eqref{eq:correctness-lower-bound-1}--\eqref{eq:correctness-lower-bound-2} and taking a minimum we find
\begin{align}
\min_{1 \leq r < s} \{G(r,s) + c(x_{r:s},x_{s:t}) + \beta\} \geq G(s,t).
\label{eq:correctness-lower-bound}
\end{align}
Then combining Equation \eqref{eq:correctness-lower-bound} with Equations \eqref{eq:correctness-upper-bound-1}--\eqref{eq:correctness-upper-bound-2} yields that for any $1<s<t$,
\begin{align} \label{eq:PersoOPRecurrence}
 G(s,t) = \min_{1 \leq r < s} \{G(r,s) + c(x_{r:s},x_{s:t}) + \beta\},
\end{align}
a Bellman-type equation. But this means we can compute $\mathcal{T}^*_{i,j}$ and $G(i,j)$ recursively from Equation \eqref{eq:PersoOPRecurrence}, starting from the initial condition $G(1,t)=0$, and once those are computed the optimal segmentation can be found by $\mathcal{T}^*=\arg\min_{\mathcal{T}^*_{i,T}} V(\mathcal{T}^*_{i,T}, x_{1:T})$. This is Algorithm \ref{algo:PenalizedOTAWA}.
\end{proof}

Next, we show that OTAWA has at least quartic complexity in the total number of timesteps of the time series.
\vspace{5pt}
\begin{proposition}
\label{prop:complexity}
Let $h(t)$ denote the worst-case time complexity of fitting a segment model $f_\theta$ on a time series of $t$ timesteps, and assume $h$ is monotone increasing. Then OTAWA has $O(T^2h(T) + T^4)$ worst-case time complexity.
\end{proposition}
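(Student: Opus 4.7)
The proof is a routine accounting of the four successive loop blocks of Algorithm \ref{algo:PenalizedOTAWA}. The plan is to bound each block's cost separately and sum the bounds, relying only on monotonicity of $h$ and the standard assumption that a single conditional log-density evaluation of a segment model $f_{\hat{\theta}}$ is $O(1)$ once the model has been fitted.

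First I would dispose of the initialization loop over $u$, which performs $O(T)$ trivial assignments. Next I would analyze the model-fitting block: it executes one fit per pair $(r,s)$ with $1\leq r<s\leq T-1$, giving $\binom{T-1}{2}=O(T^2)$ fits, and each fit on $x_{r:s}$ costs at most $h(s-r)\leq h(T)$ by monotonicity of $h$. Summing yields the $O(T^2 h(T))$ term of the bound.

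The main block is the triple loop over $s$, $t$, $r$, which accounts for the $T^4$ term. It iterates $O(T^3)$ times, and for each $(r,s,t)$ the cost $c(x_{r:s},x_{s:t})$ defined in \eqref{methodology:cost} requires evaluating the pre-fitted density at each point of $x_{s:t}$, taking $O(t-s)=O(T)$ time. The $\arg\min$ over $r$ and the set-union $S[r^*,s]\cup\{t\}$ each cost $O(T)$ per $(s,t)$ pair, contributing only $O(T^3)$ overall, which is absorbed into $O(T^4)$. The terminal $\arg\min_{1<t<T} G[t,T]$ is $O(T)$. Summing,
\begin{equation*}
O(T) + O(T^2 h(T)) + O(T^4) + O(T) \;=\; O(T^2 h(T) + T^4),
\end{equation*}
as claimed.

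The only potential obstacle is bookkeeping rather than analysis: one must be careful to separate the pre-fitting step (done once per segment, at total cost $O(T^2 h(T))$) from the subsequent repeated cost evaluations (done $O(T^3)$ times, each in $O(T)$), and to verify that neither the $\arg\min$ nor the set-union is large enough to disturb the $O(T^4)$ bound. Once these pieces are accounted for, the result follows by direct summation.
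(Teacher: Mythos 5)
Your proposal is correct and follows essentially the same line-by-line loop accounting as the paper's proof: $O(T)$ initialization, $O(T^2)$ fits at $h(T)$ each, $O(T^3)$ cost evaluations at $O(T)$ each, and lower-order terms for the $\arg\min$ operations. The only (harmless) differences are that you make explicit the $O(1)$-per-density-evaluation assumption and charge the set union $O(T)$ where the paper treats it as constant; neither affects the bound.
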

\begin{proof}
The initialization loop takes $O(T)$ iterations, with inner operations of constant time complexity. The model fitting loop takes $O(T^2)$ iterations, and the model fitting itself takes at worst $h(T)$ time. Computation of a cost $c(x_{r:s}, x_{s:t})$ following Equation \eqref{methodology:cost} takes $O(T)$ worst-case time, and there are $O(T^3)$ costs to be computed. Computation of $r^*$ takes $O(T)$ worst-case time while the other operations in the loop are constant, and there are $O(T^2)$ iterations. Finally, computation of the final argmin takes $O(T)$ time. This yields a total worst-case time complexity of $O(T^2h(T) + T^4)$.
\end{proof}

Since in practice one must find a good $\beta$, the complexity is actually higher. For example, using the CROPS algorithm as mentioned in Section \ref{sub:hyperparameter} to search for a segmentation with no more than $M$ change points lead to an overall $O(MT^2h(T) + MT^4)$ complexity.

\section{Constrained Segmentations}
\label{sec:approximations}

For many time series and choices of local models, optimization problem \eqref{eq:optimization-problem} can be solved exactly, but for other large time series this is prohibitive.  A typical solution is to restrict ourselves to searching among a subset of segmentations, since often one has a certain tolerance as to where a change point might lie. For example, one can impose a minimal distance between consecutive change points, or require that change points lie on a grid, e.g. as implemented in the change point detection library \texttt{ruptures} \citep{Truong2018ruptures}.

Formally, the first constraint imposes that $\tau_{i+1} - \tau_i \geq S \; \forall i = 0, \ldots, m$ for some $S\in\mathbb{N}$. This is reasonable as knowledge regarding how close changes can be is often available, and also as a minimal amount of observations is required within each segment in order for the local models to be trained efficiently, hence for the cross-entropy estimate to be reasonable. The second constraint imposes that $\tau_i\in R\,\mathbb{N} \; \forall i = 0, \ldots, m$ for some $R\in\mathbb{N}$. This is reasonable again when change points must be found only within a certain precision.

Many change point detection algorithms can be accommodated to search only for segmentations satisfying these constraints, and OTAWA is no exception. Indeed, in our case, all that is needed is to restrict the range of loops over $r$, $s$, and $t$ in Algorithm \ref{algo:PenalizedOTAWA} appropriately. In such a case, following the same reasoning as in Proposition \ref{prop:complexity} yields that the resulting worst-case time complexity of the algorithm is reduced to $O\big(T(T-S)h(T)/R^2 + T^2(T-S)^2/R^3\big)$. In practice the computational gains are significant.

\section{Experiments}\label{sec:experiments}

In this section, the performance of OTAWA is compared to three other methods from the literature on two real-world datasets.

\subsection{Setup}

We compare OTAWA against the Optimal Partitioning (OP) algorithm \citep{Jackson2005a}, a state-of-the-art exact method solving optimization problem \eqref{eq:classic-cost}. We use as single-segment cost the negative log-likelihood of a model fitted on the segment, and we use the number of change points as penalty. The optimization problem is solved using the PELT \citep{Killick2012} implementation in the \texttt{ruptures} library \citep{Truong2018ruptures}. We also compare against two approximate algorithms : the Window Sliding (WS) algorithm detailed in \citet[Algorithm 3]{Truong2019}, and the Binary Segmentation (BS) algorithm \citep{Scott1974}, implemented in \texttt{ruptures}. Both algorithms are based on the same single-segment cost as OP.

In order to evaluate the performance of the different methods, we use two real-world datasets for which the positions of the change points have been labeled manually. The four methods that we compare are unsupervised. The knowledge of the positions of the change points is only used a posteriori in order to evaluate the accuracy of the estimated segmentations.
All four methods also rely on segment-wise statistical models, and for a given dataset the same model is used with all methods, for comparison purposes.

We evaluate performance using the following five metrics. The Annotation Error is the absolute difference between the estimated number of change points $\widehat{m}$ and the true number $m^*$, $\textsc{AnnotationError}(\widehat{\mathcal{T}}, \mathcal{T}^*) = | m^* - \widehat{m} |$. $\textsc{F1-Score}(\mathcal{T}^*, \widehat{\mathcal{T}})$ and $\textsc{Precision}(\mathcal{T}^*, \widehat{\mathcal{T}})$ are the F1-score and precision derived from defining a detection radius $r > 0$, and considering a true change point as detected if a change point has been estimated within $r$ time-indices of its location. $\textsc{Hausdorff}(\mathcal{T}^*, \widehat{\mathcal{T}})$ is the Hausdorff distance between the subsets of $\{1, \dots, T\}$ corresponding to the true segmentation $\mathcal{T}^*$ and the estimated segmentation $\widehat{\mathcal{T}}$, seen as sets of integers. The Mean Distance metric is the mean over every true change points of the distance to the closest estimated change point,
$\textsc{MeanDistance}(\mathcal{T}^*, \widehat{\mathcal{T}}) = \frac{1}{|\mathcal{T}^*|}\sum_{t^* \in \mathcal{T}^*} \min_{\hat{t} \in \widehat{\mathcal{T}}} |\hat{t} - t^*|$. Finally, $\textsc{RandIndex}(\mathcal{T}^*, \widehat{\mathcal{T}})$, initially introduced by \cite{Rand1971} for evaluating clustering methods, is the proportion of pairs of time-indices that are either in the same segment according to both $\mathcal{T}^*$ and $\widehat{\mathcal{T}}$ or in different segments according to both $\mathcal{T}^*$ and $\widehat{\mathcal{T}}$.

As OP and BS come from the production-quality library \texttt{rutpures}  while our implementation of OTAWA and WS are proofs of concept, we do not report running times. Nonetheless, even with our development implementations, all algorithms took under 3 minutes to run on each dataset. OTAWA was the slowest in both cases, as expected from the complexity analysis of Section \ref{sec:complexity_analysis}.

\begin{figure}[t]
 \centering
 \includegraphics[width=\textwidth]{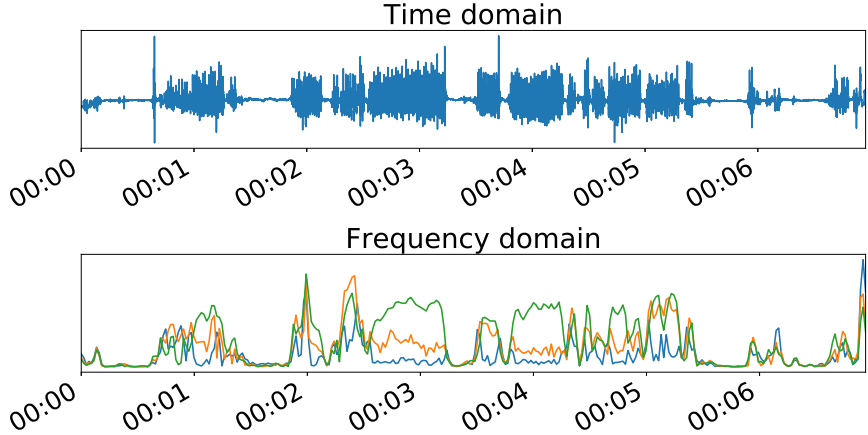}
 \caption{The Human Activities dataset. Top: acceleration measurements across time. Bottom: same signal after a STFT, only 3 of the 23 variables are displayed for visibility.}
 \label{fig:STFT_smoothing}
\end{figure}

\begin{figure}[t!]
    \centering
	\begin{subfigure}[b]{0.45\textwidth}
		\includegraphics[width=\textwidth]{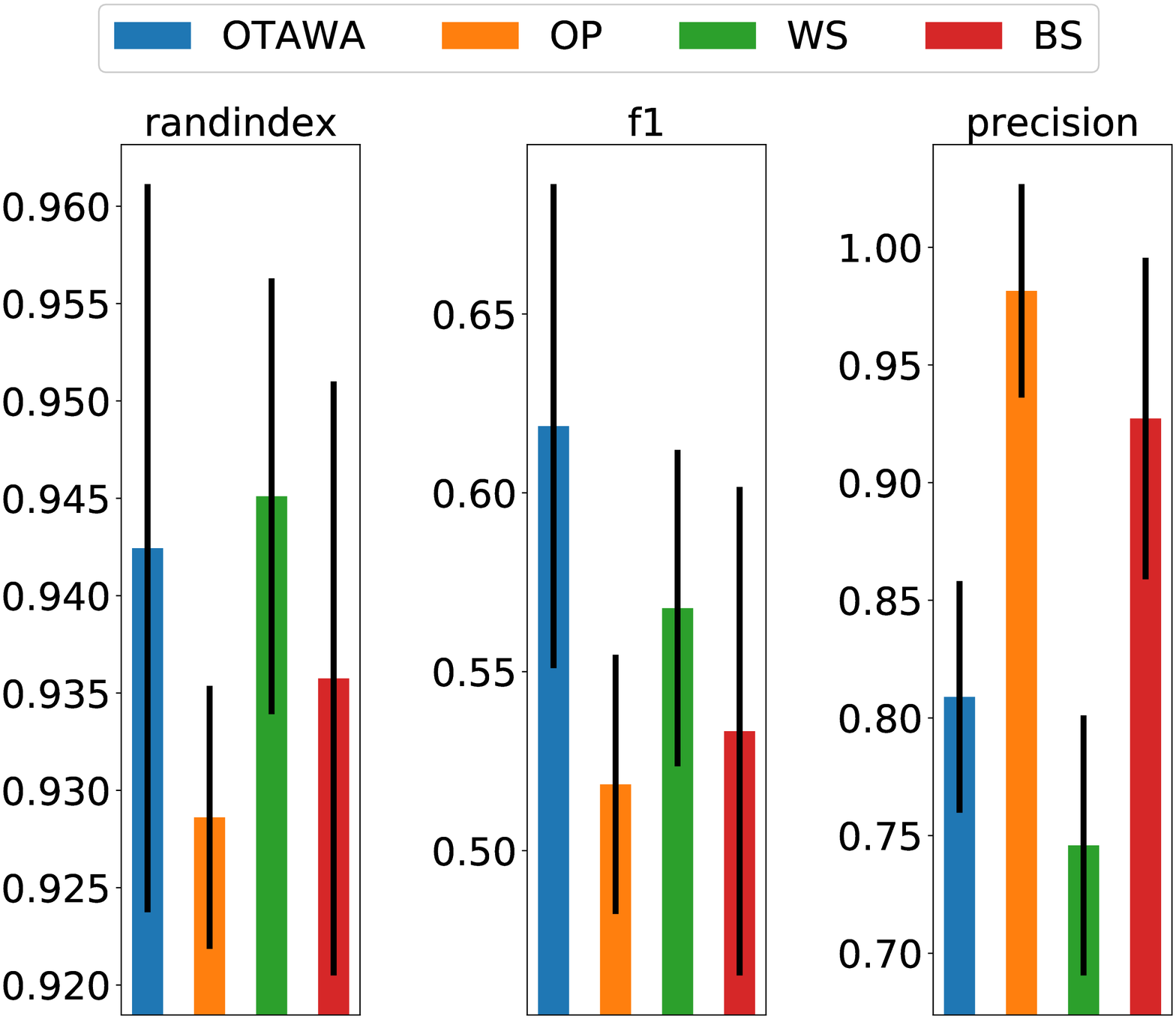}
		\caption{Results for the \textsc{RandIndex}, \textsc{F1-Score} and \textsc{Precision} metrics. (Higher is better.)}
	\end{subfigure}
	\begin{subfigure}[b]{0.45\textwidth}
		\includegraphics[width=\textwidth]{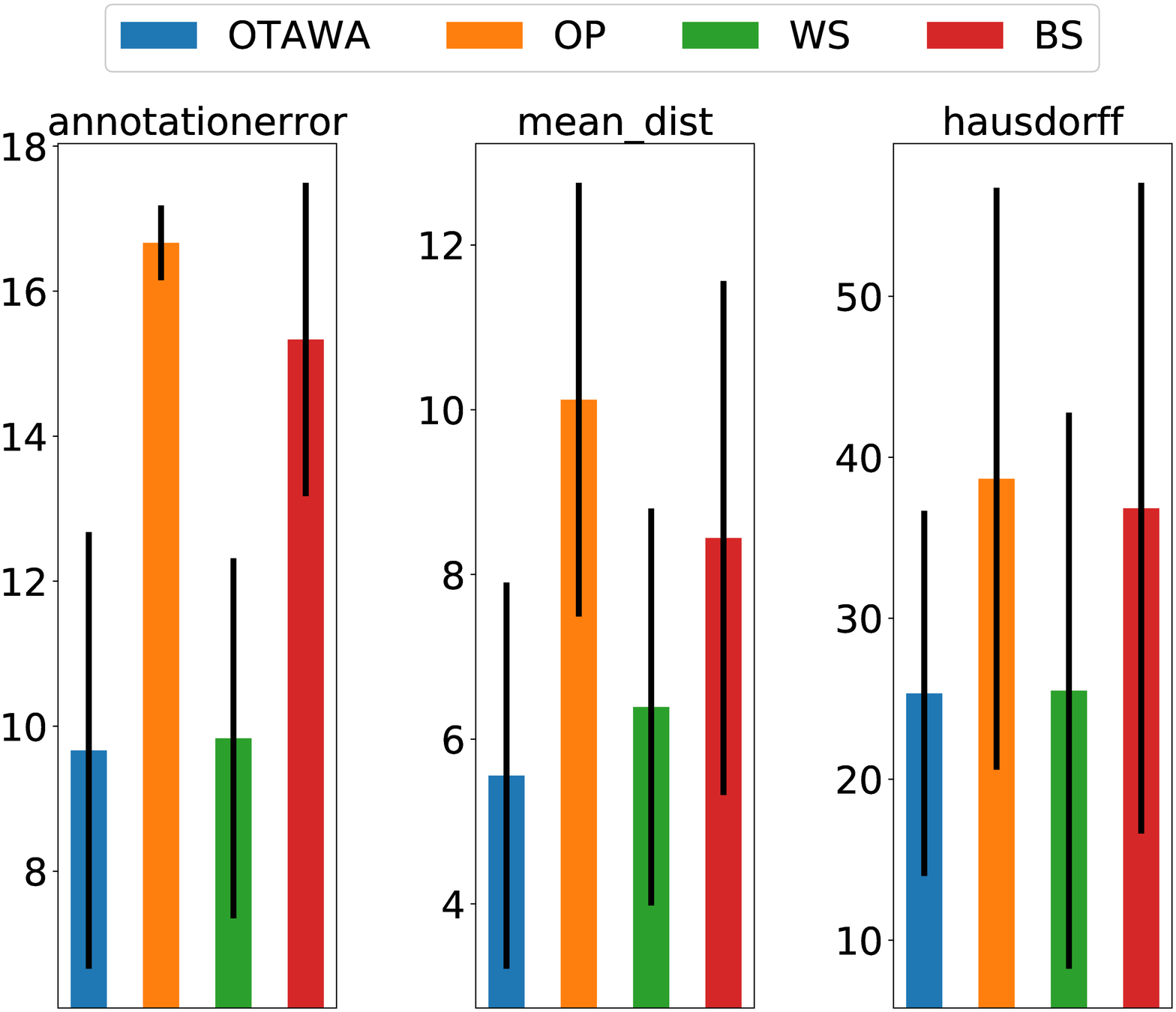}
		\caption{Results for the \textsc{AnnotationError}, \textsc{MeanDistance} and \textsc{Hausdorff} metrics. (Lower is better.)}
	\end{subfigure}
	\caption{Results for the OTAWA, Optimal Partitioning (OP),  Window Sliding (WS) and Binary Segmentation (BS) algorithms on the Human Activities dataset. The center value and the error bars represent the mean and standard deviation over the six time series of the dataset.}
	\label{fig:HASC_results}
\end{figure}

\subsection{Human Activities Dataset} \label{subsec:HAD}

The Human Activities dataset \citep{Kawaguchi2011} contains measurements acquired by a device fixed on the waist of a person while performing different activities such as walking or going up a staircase. The task is to segment the time series into the different activities in a unsupervised manner, which is a change point detection problem. Using accelerometer and gyroscope measurements along the three spacial axes yields 6 real-valued time series.

We preprocess the data by applying a short-time Fourier transform (STFT) to the time-series and clipping to the range [$0.5Hz$ -- $5Hz$] similarly to \cite{Oudre}. The STFT is performed using a window size of approximately $5s$ ($512 samples$), and an overlapping between windows of $75\%$. This yields a total of $23$ frequency bins, so that in total we have six $23$-dimensional real-valued time series with $308$ timesteps each. We normalize each variable separately so that its range lies in $[0,1]$ using min-max scaling.

Within one type of activity, it is reasonable to assume that the repetitive motion is stationary, meaning that the spectral information is stationary as well within each segment. For this reason, we assume that the observations in the time-series are i.i.d.\ and follow a Gaussian distribution with constant unknown variance, and piecewise constant mean.

With all four algorithms, we use a resolution parameter of $R = 2$ samples and a minimum segment length of $S = 8$ samples.
The Window Sliding algorithm has its window length set to $L = 10$ samples.
The \textsc{F1-Score} is computed with a detection radius of $r = 6$.

\subsubsection{Results}

Figure~\ref{fig:HASC_results} shows the mean and standard deviation of the performance obtained by each of the four methods on the six time series. We can observe that OTAWA achieves the best average performance among the four methods according to every metric except for \textsc{RandIndex}.

\begin{figure}[t!]
    \centering
	\begin{subfigure}[b]{0.45\textwidth}
		\includegraphics[width=\textwidth]{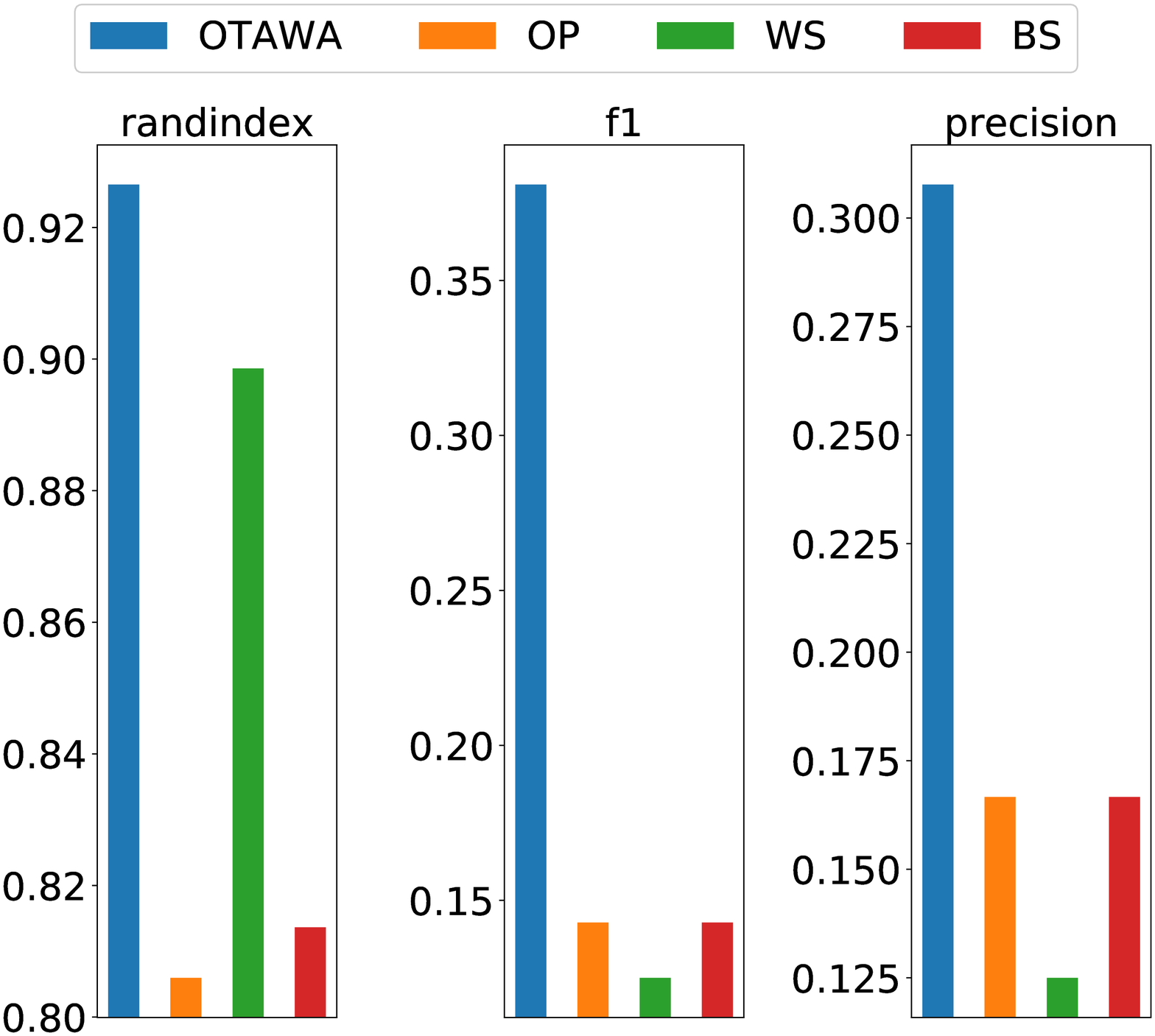}
		\caption{Results for the \textsc{RandIndex}, \textsc{F1-Score} and \textsc{Precision} metrics. (Higher is better.)}
	\end{subfigure}
	\begin{subfigure}[b]{0.45\textwidth}
		\includegraphics[width=\textwidth]{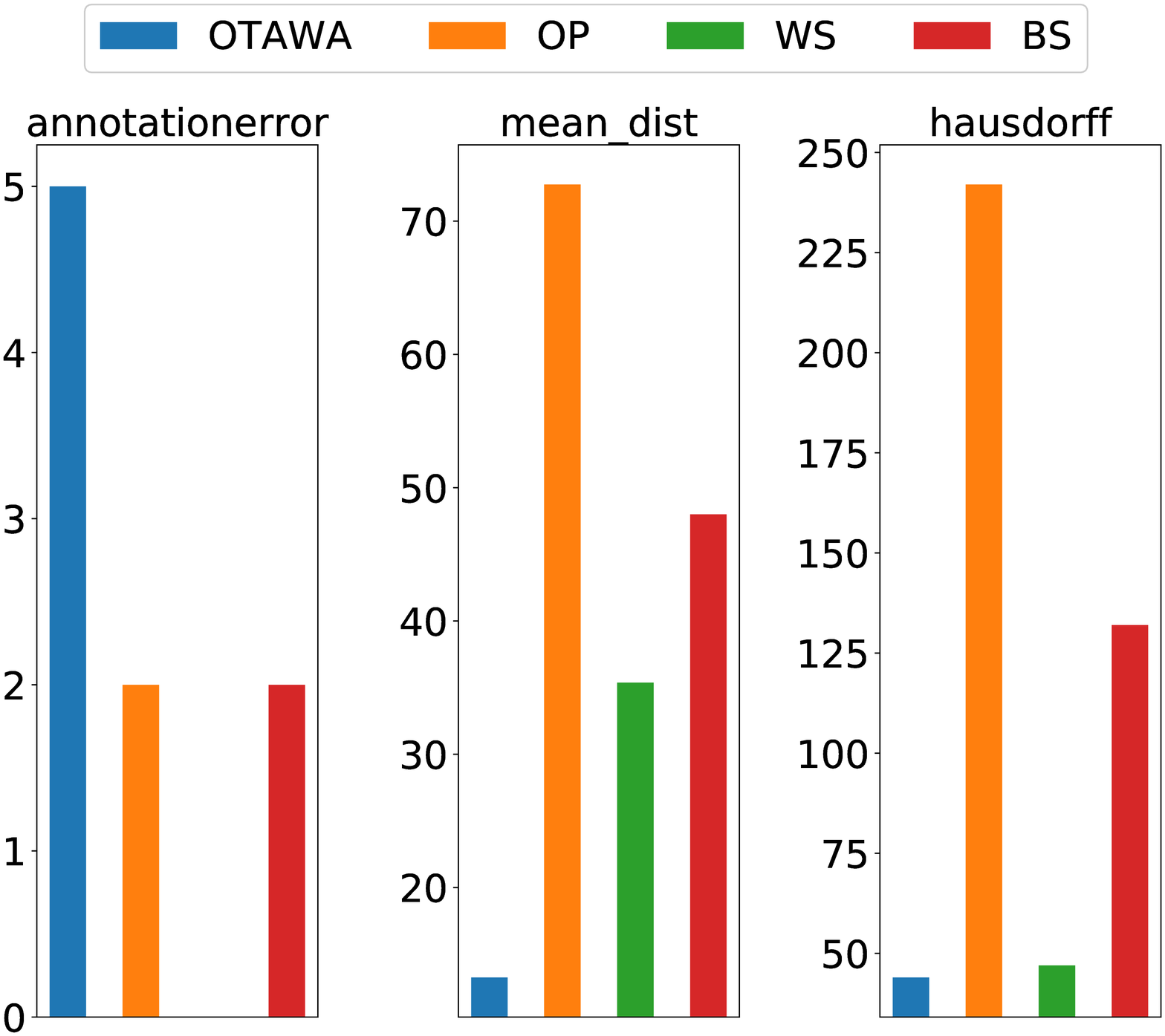}
		\caption{Results for the \textsc{AnnotationError}, \textsc{MeanDistance} and \textsc{Hausdorff} metrics. (Lower is better.)}
	\end{subfigure}
	\caption{Results for the OTAWA, Optimal Partitioning (OP), Window Sliding (WS) and Binary Segmentation (BS) algorithms on the Industrial Equipment dataset.}
	\label{fig:PREDICT_results}
\end{figure}

\subsection{Industrial Equipment Dataset}

This dataset contains measurements of two sensors acquired on an industrial equipment over a period of 716 days, to be made public upon publication of this article. The task is to retrieve in an unsupervised manner the dates of maintenance events that occurred during the recorded period. The actual dates of these events have been labeled manually, in order to evaluate the performance of the four algorithms a posteriori.  According to expert knowledge of the industrial equipment, we should expect the behavior of the equipment to slowly drift between maintenance events due to normal degradations. In contrast, maintenance events are expected to correspond to abrupt changes of the behavior, from one observation to the next. Thus, this task can be interpreted as a change point detection problem.

Stationary models such as i.i.d.\ models are not suitable here, since they would not be able to capture the drift, and we propose instead to use a vector autoregressive (VAR) model. We train it with an L1 penalty to avoid overfitting. The time-series initially contains $10399$ observations, that we reduce down to $716$ by sub-sampling at a rate of one observation per day. We also normalize each variable between $0$ and $1$ using min-max scaling.

For all four methods, we use a VAR model of order $p = 3$, estimated with L1 regularization parameter $\alpha = 10^{-2}$.
We will also use a minimum spacing of $S = 50$ samples between change points, and a resolution parameter of $R = 5$.
The Window Sliding algorithm is used with windows of size $L = 20$ samples.
The \textsc{F1-Score} and \textsc{Precision} metrics are computed with a detection radius of $r = 10$.

\subsubsection{Results}

Figure~\ref{fig:PREDICT_results} reports the performance achieved by the four algorithms on the time series. As can be seen, OTAWA outperforms the other two methods according to every metric except for \textsc{AnnotationError}.

\section{Conclusion}\label{sec:conclusion}
In this article, we proposed an extension of a standard framework for offline unsupervised change point detection. We propose to select change points so as to maximize the empirical cross-entropy between successive segments, while balancing the introduction of new change points with a penalty on the number of segments. We proposed a dynamic programming algorithm to solve this problem exactly, as well as variants on a reduced search space, and detailed experimental evidence of the improvements provided by our approach against state-of-the-art methods on two challenging datasets.

Our approach can be regarded as a extension of the standard sum-of-costs formulation to costs depending on pairs of segments, in the case where costs are derived from negative log-likelihoods. A promising line of work would be to generalize this approach to arbitrary cost functions, such as derived from parametric and non-parametric hypothesis testing.

\bibliographystyle{apalike}
\bibliography{arxiv}

\begin{thebibliography}{}

\bibitem[Aminikhanghahi and Cook, 2017]{Aminikhanghahi2017}
Aminikhanghahi, S. and Cook, D.~J. (2017).
\newblock A survey of methods for time series change point detection.
\newblock {\em Knowledge and Information Systems}, 51:339--367.

\bibitem[Auger and Lawrence, 1989]{Auger1989}
Auger, I.~E. and Lawrence, C.~E. (1989).
\newblock {Algorithms for the optimal identification of segment neighborhoods}.
\newblock {\em Bulletin of Mathematical Biology}, 51(1):39--54.

\bibitem[Baum and Petrie, 1966]{Baum1966}
Baum, L.~E. and Petrie, T. (1966).
\newblock Statistical inference for probabilistic functions of finite state
  markov chains.
\newblock {\em The Annals of Mathematical Statistics}, 37(6):1554--1563.

\bibitem[Bosc et~al., 2003]{Bosc2003}
Bosc, M., Heitz, F., Armspach, J.-P., Namer, I., Gounot, D., and Rumbach, L.
  (2003).
\newblock {Automatic change detection in multimodal serial MRI: application to
  multiple sclerosis lesion evolution}.
\newblock {\em NeuroImage}, 20(2):643--656.

\bibitem[Chakar et~al., 2017]{Chakar2017}
Chakar, S., Lebarbier, E., L{\'{e}}vy-Leduc, C., and Robin, S. (2017).
\newblock {A robust approach for estimating change-points in the mean of an
  AR(1) process}.
\newblock {\em Bernouilli Society for Mathematical Statistics and Probability},
  23(2):1408--1447.

\bibitem[Chowdhury et~al., 2012]{Chowdhury2012}
Chowdhury, M. F.~R., Selouani, S.-A., and O'Shaughnessy, D. (2012).
\newblock {Bayesian on-line spectral change point detection: a soft computing
  approach for on-line ASR}.
\newblock {\em International Journal of Speech Technology}, 15(1):5--23.

\bibitem[Haynes et~al., 2017a]{Haynes2017a}
Haynes, K., Eckley, I.~A., and Fearnhead, P. (2017a).
\newblock Computationally efficient changepoint detection for a range of
  penalties.
\newblock {\em Journal of Computational and Graphical Statistics},
  26(1):134--143.

\bibitem[Haynes et~al., 2017b]{Haynes2017}
Haynes, K., Fearnhead, P., and Eckley, I.~A. (2017b).
\newblock {A computationally efficient nonparametric approach for changepoint
  detection}.
\newblock {\em Statistics and Computing}, 27(5):1293--1305.

\bibitem[Hocking et~al., 2013]{Hocking2013}
Hocking, T.~D., Schleiermacher, G., Janoueix-Lerosey, I., Boeva, V., Cappo, J.,
  Delattre, O., Bach, F., and Vert, J.-P. (2013).
\newblock Learning smoothing models of copy number profiles using breakpoint
  annotations.
\newblock {\em BMC Bioinformatics}, 14:164.

\bibitem[Jackson et~al., 2005]{Jackson2005a}
Jackson, B., Scargle, J.~D., Barnes, D., Arabhi, S., Alt, A., Gioumousis, P.,
  Gwin, E., Sangtrakulcharoen, P., Tan, L., and Tsai, T.~T. (2005).
\newblock {An algorithm for optimal partitioning of data on an interval}.
\newblock {\em IEEE Signal Processing Letters}, 12(2):105--108.

\bibitem[Kawaguchi et~al., 2011]{Kawaguchi2011}
Kawaguchi, N., Ogawa, N., Iwasaki, Y., Kaji, K., Terada, T., Murao, K., Inoue,
  S., Kawahara, Y., Sumi, Y., and Nishio, N. (2011).
\newblock {HASC Challenge}: Gathering large scale human activity corpus for the
  real-world activity understandings.
\newblock In {\em Proceedings of the 2nd Augmented Human International
  Conference}, AH ’11, New York, NY, USA. Association for Computing
  Machinery.

\bibitem[Keogh et~al., 2001]{Keogh2001}
Keogh, E., Chu, S., Hart, D., and Pazzani, M. (2001).
\newblock An online algorithm for segmenting time series.
\newblock In {\em Proceedings of the 2001 IEEE International Conference on Data
  Mining}, pages 289--296. IEEE.

\bibitem[Killick et~al., 2012a]{Killick2012}
Killick, R., Fearnhead, P., and Eckley, I.~A. (2012a).
\newblock Optimal detection of changepoints with a linear computational cost.
\newblock {\em Journal of the American Statistical Association},
  107(500):1590--1598.

\bibitem[Killick et~al., 2012b]{Killick2012a}
Killick, R., Fearnhead, P., and Eckley, I.~A. (2012b).
\newblock {Supplemental material: Optimal detection of changepoints with a
  linear computational cost}.
\newblock {\em Journal of the American Statistical Association}, 107(500).

\bibitem[Ko et~al., 2015]{Ko2015}
Ko, S.~I., Chong, T.~T., and Ghosh, P. (2015).
\newblock Dirichlet process hidden {Markov} multiple change-point model.
\newblock {\em Bayesian Analysis}, 10(2):275--296.

\bibitem[Maidstone et~al., 2017]{Maidstone2017}
Maidstone, R., Hocking, T., Rigaill, G., and Fearnhead, P. (2017).
\newblock {On optimal multiple changepoint algorithms for large data}.
\newblock {\em Statistics and Computing}, 27(2):519--533.

\bibitem[Oudre et~al., 2011]{Oudre}
Oudre, L., Lung-Yut-Fong, A., and Bianchi, P. (2011).
\newblock Segmentation automatique de signaux issus d’un
  acc{\'e}l{\'e}rom{\`e}tre triaxial en p{\'e}riode de marche.
\newblock {\em Proceedings of the Groupe de Recherche et d’Etudes en
  Traitement du Signal et des Images (GRETSI), Bordeaux, France}.

\bibitem[Page, 1954]{PAGE1954}
Page, E.~S. (1954).
\newblock Continuous inspection schemes.
\newblock {\em Biometrika}, 41(1-2):100--115.

\bibitem[Rand, 1971]{Rand1971}
Rand, W.~M. (1971).
\newblock Objective criteria for the evaluation of clustering methods.
\newblock {\em Journal of the American Statistical Association}, 66(336):846.

\bibitem[Reeves et~al., 2007]{Reeves2007}
Reeves, J., Chen, J., Wang, X.~L., Lund, R., and Lu, Q.~Q. (2007).
\newblock A review and comparison of changepoint detection techniques for
  climate data.
\newblock {\em Journal of Applied Meteorology and Climatology}, 46(6):900--915.

\bibitem[Rigaill, 2015]{Rigaill2015}
Rigaill, G. (2015).
\newblock {A pruned dynamic programming algorithm to recover the best
  segmentations with 1 to $K_\text{max}$ change-points.}
\newblock {\em Journal de la Soci{\'{e}}t{\'{e}} Fran{\c{c}}aise de
  Statistique}, 156(4):180--205.

\bibitem[Schr{\"{o}}der and Ombao, 2019]{Schroder2018}
Schr{\"{o}}der, A.~L. and Ombao, H. (2019).
\newblock {FreSpeD}: Frequency-specific change-point detection in epileptic
  seizure multi-channel {EEG} data.
\newblock {\em Journal of the American Statistical Association},
  114(525):115--128.

\bibitem[Schwarz, 1978]{Schwarz1978}
Schwarz, G. (1978).
\newblock Estimating the dimension of a model.
\newblock {\em The Annals of Statistics}, 6(2):461--464.

\bibitem[Scott and Knott, 1974]{Scott1974}
Scott, A.~J. and Knott, M. (1974).
\newblock A cluster analysis method for grouping means in the analysis of
  variance.
\newblock {\em Biometrics}, 30(3):507.

\bibitem[Tartakovsky et~al., 2014]{Tartakovsky2014}
Tartakovsky, A., Nikiforov, I., and Basseville, M. (2014).
\newblock {\em Sequential analysis: Hypothesis testing and changepoint
  detection}.
\newblock Chapman and Hall/CRC.

\bibitem[Truong et~al., 2018]{Truong2018ruptures}
Truong, C., Oudre, L., and Vayatis, N. (2018).
\newblock ruptures: change point detection in python.

\bibitem[Truong et~al., 2019]{Truong2019}
Truong, C., Oudre, L., and Vayatis, N. (2019).
\newblock Selective review of offline change point detection methods.
\newblock {\em Signal Processing}, 107299.

\bibitem[Zou et~al., 2014]{Zou2014}
Zou, C., Yin, G., Feng, L., and Wang, Z. (2014).
\newblock Nonparametric maximum likelihood approach to multiple change-point
  problems.
\newblock {\em The Annals of Statistics}, 42(3):970--1002.

\end{thebibliography}
\end{document}